%% file: acl_latex.tex
\documentclass[11pt]{article}

\usepackage[preprint]{acl}

\usepackage{times}
\usepackage{latexsym}
\usepackage{amsmath}
\usepackage{enumitem}

\usepackage[T1]{fontenc}

\usepackage[utf8]{inputenc}

\usepackage{microtype}

\usepackage{inconsolata}

\usepackage{graphicx}
\usepackage{booktabs}
\usepackage{tabularx}
\usepackage{subcaption}
\usepackage{cleveref}
\usepackage{amsthm}
\usepackage{amssymb}

\newtheorem{lemma}{Lemma}

\title{Beware of the Batch Size: Hyperparameter Bias in Evaluating LoRA}

\author{Sangyoon Lee, \hspace{1.5em} Jaeho Lee\\
  Pohang University of Science and Technology
 (POSTECH) \\
  \texttt{\{sangyoon.lee, jaeho.lee\}@postech.ac.kr} \\}

\begin{document}
\maketitle
\begin{abstract}
Low-rank adaptation (LoRA) is a standard approach for fine-tuning large language models, yet its many variants report conflicting empirical gains, often on the same benchmarks. We show that these contradictions arise from a single overlooked factor: the batch size. When properly tuned, vanilla LoRA often matches the performance of more complex variants. We further propose a proxy-based, cost-efficient strategy for batch size tuning, revealing the impact of rank, dataset size, and model capacity on the optimal batch size. Our findings elevate batch size from a minor implementation detail to a first-order design parameter, reconciling prior inconsistencies and enabling more reliable evaluations of LoRA variants.

\end{abstract}

\input{latex/1_intro}
\input{latex/2_exp}
\input{latex/2_main}
\input{latex/3_main2}
\input{latex/5_conclusion}
\input{latex/6.limit}

\bibliography{custom}

\newpage
~
\newpage
\appendix
\input{latex/App_related}
\input{latex/App_steps}
\input{latex/App_warmpup}
\input{latex/App_config}
\input{latex/App_lr}
\input{latex/App_theoretical}
\input{latex/App_ablation}
\input{latex/App_gpuhour}

\input{latex/App_concurrent}
\input{latex/App_license}

\end{document}

%% file: latex/1_intro.tex
\section{Introduction}

Low-rank adaptation, or simply LoRA \citep{Hu2022LoRA}, has emerged as the de facto standard method for the parameter-efficient fine-tuning of large language models (LLMs). Building on this success, numerous LoRA variants have been proposed, each claiming a performance gain over the vanilla LoRA \citep{Zhang2023AdaLoRA, Liu2024dora, Meng2024pissa, Wang2025milora, Kalaj2023rslora}.

A disturbing fact about LoRA variants is that they often contradict each other. For example, two recent works---PiSSA and MiLoRA---propose conflicting initialization strategies for LoRA, each focusing on the faithful preservation of the principal singular vectors and minor singular vectors of the pretrained LLM weights, respectively \citep{Meng2024pissa,Wang2025milora}. More intriguingly, experiments suggest that both methods provide performance gains on seemingly identical benchmarks. Why does such a contradiction take place?

In this work, we reveal that this apparent contradiction stems from the inconsistency in a critical yet overlooked hyperparameter: the \textit{\textbf{batch size}}. In particular, we observe that, given a properly tuned batch size, the vanilla LoRA can match or even outperform its more complex variants (\Cref{fig:method}). This finding highlights the pivotal role of a careful batch size selection, not only for effective LLM fine-tuning, but also for a reliable and reproducible evaluation of LoRA variants.

Despite its importance, the selection of LoRA batch sizes primarily relies on crude heuristics, e.g., ``smaller is better'' \citep{schulman2025lora_regret}. While the influence of batch size has been extensively studied for full-parameter training \cite{keskar2017, shallue2019, Zhang2025CBS_llm, Pareja2025unveil_SFT}, these insights do not directly extend to LoRA, operating under unique constraints and demonstrating distinct optimization dynamics \citep{Shuttleworth2025, Biderman2025learnless}. Furthermore, given that LoRA is primarily used in resource-constrained settings, an exhaustive hyperparameter sweep of the batch size may be infeasible as well.



To address this gap, we initiate a systematic study toward understanding how various scale parameters of LoRA workloads---rank, dataset scale, and model size---affect the optimal batch size. 
In particular, we find that the optimal batch size remains relatively consistent under the changes in terms of the rank and the model size, but not for the dataset size. This observation suggests a low-cost proxy upon which one may tune the batch size on small-scale LLMs with low rank, then transfer the batch size to the larger models.

In summary, our contributions are threefold:



\begin{enumerate}
\item We demonstrate that vanilla LoRA remains a highly competitive baseline when batch sizes are properly tuned, suggesting that reported gains in its variants are partially artifacts of suboptimal hyperparameter choices.

\item We find that using a larger batch does not necessarily lead to strictly worse accuracy. Instead, we identify an optimal batch size that maximizes test performance.

\item  We establish a practical guideline for small-scale proxies, showing that the optimal batch configurations can be identified using lower ranks and smaller model capacities as long as dataset scale is preserved.
\end{enumerate}

Taken together, our findings shed light on a critical yet overlooked role of batch size in LoRA. We hope this work encourages more robust and reproducible evaluation practices and provides actionable guidance for practitioners deploying LLMs under real world constraints.

%% file: latex/2_exp.tex
\section{Experimental Setup}
\label{sec:setup}

Our experimental setup primarily follows that of \citet{Meng2024pissa}, with varying selections of the batch sizes. 
All reported figures, except for those in Figures \ref{fig:step}, \ref{fig:warmup}, and \ref{fig:lr}, are an average of three random seeds. Other details are as follows.

\paragraph{Model.} We mainly use the LLaMA-2-7B \cite{Touvron2023llama2} as a base model. This choice is to ensure consistency with the key baselines \citep{Meng2024pissa,Wang2025milora}. We also provide additional experiments with other recent model families, Qwen3-0.6B \cite{yang2025qwen3} and Gemma3-1B \cite{Kamath2025Gemma3}, in Appendix \ref{app:ablation}.


\paragraph{Datasets.} We follow the standard evaluation of the mathematical reasoning task: We fine-tune the model on MetaMathQA \cite{metamath}, and then evaluate on GSM8K benchmark \cite{gsm8k}. By default, we use the first 100K samples in the training split of the MetaMathQA. To demonstrate the generalizability of our findings to other tasks, results on the HumanEval benchmark \cite{humaneval}, fine-tuned on CodeFeedback dataset \cite{codefeedback}, is provided in Appendix \ref{app:ablation}.

\paragraph{Learning rate sweep.} For each batch size selected, we conduct a hyperparameter sweep to find the optimal learning rate. This is mainly due to the prior observations in the full-parameter training literature which indicates that the batch size closely interacts with the learning rate \cite{dontdecaylearningrate}. We select learning rates in the range of $1\times10^{-5}$ to $3\times10^{-3}$; see Appendix \ref{app:opt_lr} for details.

\paragraph{Fixed sample protocol.} To decouple the impact of batch size from the total number of training tokens used, our primary experiments are conducted under a fixed sample protocol restricted to a single epoch. This setup aligns with data efficiency objectives and reflects standard practice in the supervised fine-tuning literature \cite{Pareja2025unveil_SFT}, where limited data availability necessitates precise control over training duration to mitigate overfitting.

Note that this configuration implies that larger batch sizes undergo fewer gradient updates, which is one potential reason why the heuristic of ``smaller is better'' holds. To provide a more comprehensive view, we include results where the number of optimization steps is fixed across the setups (Appendix \ref{app:steps}). In such a case, the total number of processed samples varies across batch size setups. These complementary results indicate that while the confounding effect of steps cannot be entirely discounted, the batch size inherently possesses a critical impact, showing diminishing returns even with increased data volume. Unless otherwise stated, we adopt the fixed sample protocol for our experiments to align with common fine-tuning practices.

\paragraph{Other hyperparameters.} To isolate the effects of batch size, all other hyperparameters are kept identical across configurations (see Appendix \ref{app:config}). Notably, we omit the warm-up phase, as varying batch sizes alters the total step count under our fixed sample protocol (see Appendix \ref{app:warmup}).

%% file: latex/2_main.tex
\section{Re-evaluating LoRA Variants: The Impact of Batch Size}

In this section, we evaluate vanilla LoRA alongside two prominent variants: PiSSA \cite{Meng2024pissa}, which initializes adapters using principal singular vectors, and MiLoRA \cite{Wang2025milora}, which uses minor singular vectors. These opposing studies claim state-of-the-art performance on identical benchmarks, while resulting in contradictory conclusions derived from disparate experimental configurations. To ensure a rigorous comparison, we re-evaluate these methods within a unified framework, as illustrated in Figure \ref{fig:method}.


\input{latex/figure/method}

\subsection{Dissecting reported gains: Methodological superiority or hyperparameter bias?}

From Figure \ref{fig:method}, we observe a ``performance crossover'' between two LoRA variants: PiSSA achieves superior accuracy in the large batch regime, whereas MiLoRA beats PiSSA with smaller batches. Crucially, this discrepancy aligns with the configurations used in their respective original papers, in which MiLoRA was evaluated using smaller batch sizes than PiSSA. This suggests that their reported gains and opposing insights stem not from inherent algorithmic superiority, but from a confounding bias induced by batch size selection. Thus, no single variant remains universally superior across the entire batch size spectrum. Remarkably, our empirical results demonstrate that the selection of batch size alone can induce a substantial performance gap of over 10\% in final accuracy.

Furthermore, we find that vanilla LoRA remains a highly competitive baseline when evaluated under fair conditions. It exhibits comparable performance to its variants across varying batch sizes and achieves the best overall result when its hyperparameters are optimally tuned. This underscores that the critical role of batch size has been largely overlooked in LoRA literature.

\subsection{Challenging prevalent heuristics}

Contrary to conventional heuristics suggesting that smaller batch sizes are universally superior for LoRA fine-tuning \cite{schulman2025lora_regret}, we demonstrate that performance does not scale monotonically as batch size decreases. Instead, there exists an optimal batch size threshold that maximizes hardware efficiency without compromising final accuracy. 

This non-monotonic trend is robust across multiple LoRA variants, model architectures, and tasks while the exact critical threshold varies (see Figure \ref{fig:others}). Given that LoRA is typically deployed in resource constrained environments, maximizing hardware utility is pivotal; however, this must be balanced against systemic variables such as LoRA rank, dataset scale, and model capacity. Section \ref{sec:determinants} provides a rigorous analysis of the interactions between batch size and these determinants to establish systematic guidelines for optimal configuration.


%% file: latex/figure/method.tex
\begin{figure}[h]
\begin{center}
\includegraphics[width=0.42\textwidth]{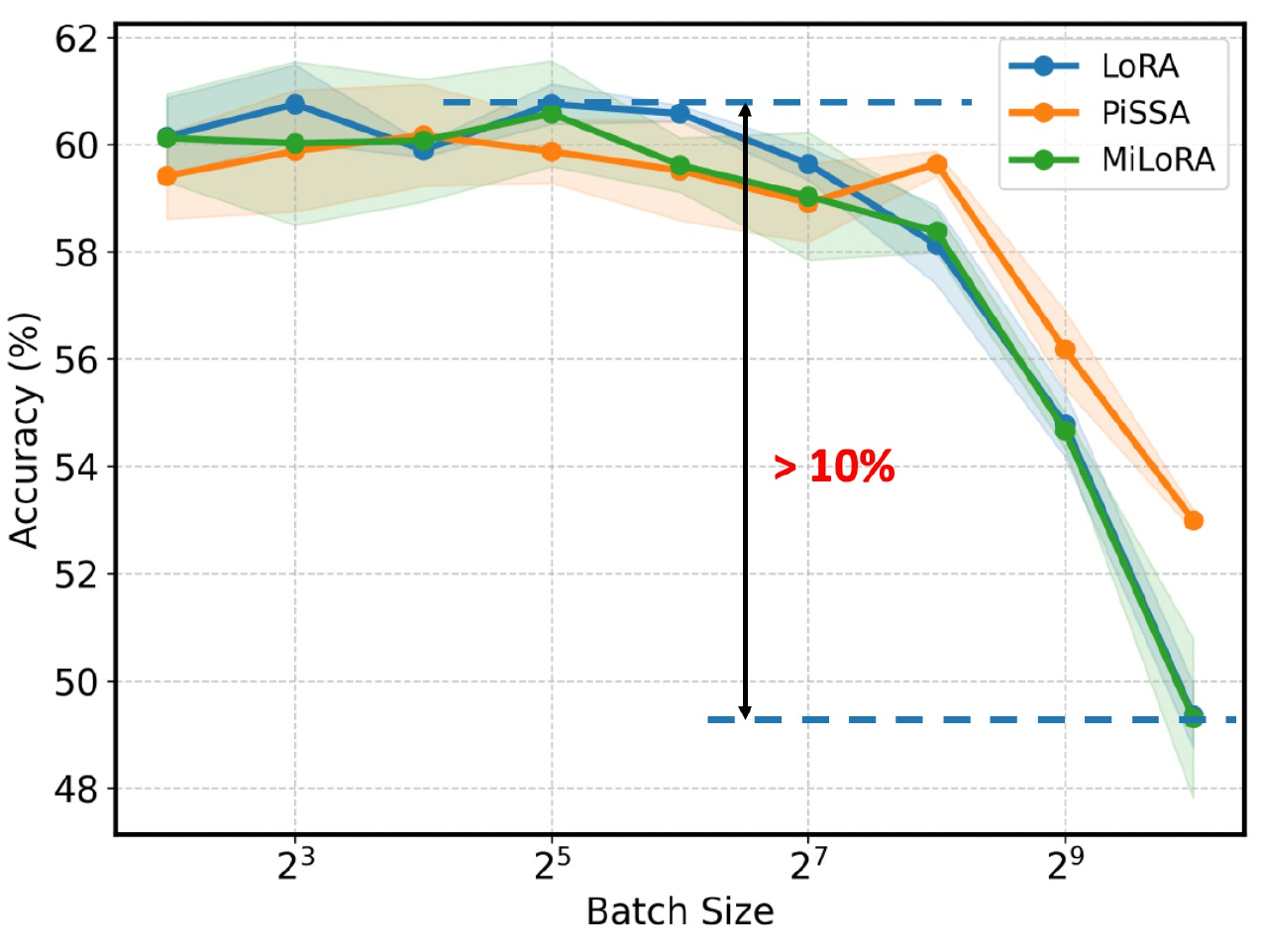}
\end{center}
   \caption{\textbf{Impact of batch size across LoRA variants.} We observe that batch size selection alone can lead to a performance gap of over 10\% in accuracy. Notably, when evaluated at its optimal batch size, vanilla LoRA beats both PiSSA and MiLoRA in math reasoning task.
}
   \label{fig:method}
\end{figure}

%% file: latex/3_main2.tex
\section{Determinants of Batch Size Effect}
\label{sec:determinants}

\input{latex/figure/shifted_acc}

This section identifies the fundamental factors required to construct a reliable low-cost proxy for batch size tuning. We investigate how the impact of batch size varies with three key determinants: (i) LoRA rank, (ii) dataset scale, and (iii) base model capacity. These variables are selected as they represent the primary factors of the total computational overhead in LoRA-based training. For each analysis, we maintain our setup established in Section \ref{sec:setup}, while altering each determinant independently. We reveal a critical insight for establishing a proxy: while batch size effects are robust to changes in LoRA rank and base model size, they are highly sensitive to dataset scale. 

\subsection{Impact of LoRA rank}
We first investigate the interaction between batch size and the rank $r$ of LoRA adapters. Figure \ref{fig:rank_shift} illustrates the test accuracies across a range of batch sizes for ranks spanning from 32 to 256. While minor variations exist, the fundamental performance trend remains consistent across all evaluated ranks. Notably, in Figure \ref{fig:rank_org}, increasing $r$ beyond a certain threshold yields marginal performance gains. This suggests that in some scenarios, employing a moderate LoRA rank in conjunction with a larger batch size can yield a more Pareto optimal configuration for balancing performance and throughput.

\subsection{Impact of dataset scale}
We compare training configurations using subsets of the MetaMathQA dataset ranging from 25K to 200K samples. As illustrated in Figure \ref{fig:data_shift}, LoRA training on larger datasets effectively accommodates larger batch sizes without performance degradation compared to smaller data regimes. 
This highlights that the batch size is not a static hyperparameter but a dynamic one that must be carefully calibrated relative to the total available data scale.

\subsection{Impact of base model capacity}

To examine how the capacity of the backbone model influences batch size effect, we compared two different models, LLaMA-2-7B and 13B. Our empirical results in Figure \ref{fig:model_shift} demonstrate that the impact of batch size remains remarkably consistent regardless of the model scale. This suggests that the relationship between batch size and the model capacity is scale invariant. Therefore, the batch size configurations identified on smaller, more manageable models can be reliably extrapolated to larger models, significantly reducing the computational cost of hyperparameter tuning.

\subsection{Guidelines for small-scale proxies}
\label{subsec:guideline}

Building on our analysis, we establish a practical strategy to identify the optimal batch size for LoRA fine-tuning. The batch size effect is scale invariant regarding LoRA rank and model capacity, while it is sensitive to the total data scale. In conclusion, the most reliable proxy is to employ a smaller model with a lower rank while training on the full target dataset. This approach ensures high fidelity hyperparameter transfer with a fraction of the original computational overhead.

\subsection {Theoretical justification}

Our finding that the optimal batch size is invariant to the base model capacity is consistent with recent theoretical findings on the infinite-width regime \cite{yang2021tensor4}. Beyond a certain model capacity threshold, the training dynamics become largely independent of the base model size, explaining the consistent batch size trends observed across different model scales \cite{Zhang2025CBS_llm}.
Furthermore, the dominance of the dataset scale over rank can be analyzed through the lens of the gradient noise scale \cite{mccandlish2018}.

To formally analyze this, consider the following setup: Let the dataset $\{(\mathbf{x}_i, y_i)\}_{i=1}^N$ be drawn i.i.d.\ from $\mathcal{N}(\mathbf{0}, \mathbf{I}_r)\times \mathrm{Unif}(\{-1, +1\})$, where the features are embedded into $\tilde{\mathbf{x}}_i$ via $\tilde{\mathbf{x}}_i := \mathbf{U}\mathbf{x}_i$ using an orthogonal matrix $\mathbf{U} \in \{\mathbf{A}\in \mathbb{R}^{d\times r} \mid \mathbf{A}^\top \mathbf{A} = \mathbf{I}_r\}$. For a quadratic objective $\hat{L}(\mathbf{w}) := \tfrac{1}{2N} \sum_{i=1}^{N}(\mathbf{w}^\top \tilde{\mathbf{x}}_i - y_i)^2$, the relationship between the dataset size $N$ and the LoRA rank $r$ can be characterized by the following lemma: 

\begin{lemma}\label{lemma1}
For $\hat{L}(\mathbf{w})$ with any rank $r > 2$, the expected GNS proxy $\mathbb{E}[B_{\mathrm{simple}}]$ satisfies:
\begin{equation}
\mathbb{E}[B_{\mathrm{simple}}] = \frac{rN}{r-2}
\end{equation}
\end{lemma}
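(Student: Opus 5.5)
The plan is to evaluate the simple gradient noise scale of \citet{mccandlish2018}, $B_{\mathrm{simple}} = \operatorname{tr}(\boldsymbol{\Sigma})/\|\mathbf{G}\|^2$, at the initialization $\mathbf{w} = \mathbf{0}$. Here $\mathbf{G} = \nabla \hat{L}(\mathbf{0})$ is the full-batch gradient and $\boldsymbol{\Sigma}$ is the covariance of a single per-example gradient. I take $\mathbf{w}=\mathbf{0}$ because it is the natural LoRA starting point, where the adapter product vanishes. The calculation then has three steps: compute the numerator, identify the law of the denominator, and take the expectation with an inverse chi-square moment.

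\textbf{Gradients and the numerator.} At $\mathbf{w}=\mathbf{0}$, the per-example gradient is $\mathbf{g}_i = (\mathbf{w}^\top\tilde{\mathbf{x}}_i - y_i)\tilde{\mathbf{x}}_i = -y_i\mathbf{U}\mathbf{x}_i$. Since $y_i \in \{\pm 1\}$ is independent of $\mathbf{x}_i \sim \mathcal{N}(\mathbf{0},\mathbf{I}_r)$, the product $y_i\mathbf{x}_i$ is again distributed as $\mathcal{N}(\mathbf{0},\mathbf{I}_r)$. Hence $\mathbf{g}_i$ has mean zero and covariance $\mathbf{U}\mathbf{U}^\top$, so
\[
\operatorname{tr}(\boldsymbol{\Sigma}) = \operatorname{tr}(\mathbf{U}^\top\mathbf{U}) = r .
\]
This treats $\boldsymbol{\Sigma}$ as the population per-example covariance, which is a deterministic quantity. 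That choice is what makes the stated closed form exact.

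\textbf{Law of the denominator.} The full-batch gradient is $\mathbf{G} = -\frac{1}{N}\mathbf{U}\sum_{i=1}^N y_i\mathbf{x}_i$. Because $\mathbf{U}^\top\mathbf{U}=\mathbf{I}_r$, we have $\|\mathbf{G}\|^2 = \frac{1}{N^2}\bigl\|\sum_i y_i\mathbf{x}_i\bigr\|^2$. The summands $y_i\mathbf{x}_i$ are i.i.d.\ $\mathcal{N}(\mathbf{0},\mathbf{I}_r)$, so their sum is $\mathcal{N}(\mathbf{0}, N\mathbf{I}_r)$. Therefore $\|\mathbf{G}\|^2 = Z/N$ with $Z \sim \chi^2_r$.

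\textbf{Expectation.} Combining the two steps gives $B_{\mathrm{simple}} = rN/Z$, so $\mathbb{E}[B_{\mathrm{simple}}] = rN\,\mathbb{E}[Z^{-1}]$. It remains to show $\mathbb{E}[1/\chi^2_r] = 1/(r-2)$. I would verify this directly from the $\chi^2_r$ density:
\[
\int_0^\infty z^{-1}\,\frac{z^{r/2-1}e^{-z/2}}{2^{r/2}\Gamma(r/2)}\,dz
= \frac{2^{r/2-1}\Gamma(r/2-1)}{2^{r/2}\Gamma(r/2)}
= \frac{1}{r-2}.
\]
The integral is finite exactly when $r>2$, which explains the hypothesis of Lemma~\ref{lemma1}. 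Substituting yields $\mathbb{E}[B_{\mathrm{simple}}] = rN/(r-2)$.

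\textbf{Main obstacle.} The expected sticking point is the definitional convention for $B_{\mathrm{simple}}$: the evaluation point, and whether $\boldsymbol{\Sigma}$ is the population or the empirical covariance. If $\boldsymbol{\Sigma}$ were the empirical covariance, its trace would be random and correlated with $\|\mathbf{G}\|^2$, and the formula would be only approximate. I would therefore state explicitly that $\boldsymbol{\Sigma}$ is the population per-example covariance at $\mathbf{w}=\mathbf{0}$, matching \citet{mccandlish2018}. With that convention, the remaining steps are a Gaussian symmetrization and a standard inverse-moment computation.
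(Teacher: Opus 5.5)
Your proposal is correct and follows the paper's proof essentially step for step. You evaluate at $\mathbf{w}=\mathbf{0}$, take $\operatorname{tr}(\Sigma)=r$ from the per-example covariance $\mathbf{U}\mathbf{U}^\top$, identify $\|\mathbf{G}\|^2 \sim \chi^2_r/N$, and apply $\mathbb{E}[1/\chi^2_r]=1/(r-2)$. If anything yours is tighter. You justify the trace directly, where the paper only notes that $\Sigma$ has rank at most $r$. You also verify the inverse moment from the density, and you make explicit the population-covariance convention that turns the paper's final ``$\approx$'' into the stated equality.
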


This lemma provides a theoretical basis for our empirical observation: the dataset scale ($N$) is the primary determinant of the optimal batch size, while the influence of the LoRA rank ($r$) becomes marginal as it increases. Further discussion on the gradient noise scale and the full proof of lemma \ref{lemma1} are provided in Appendix \ref{app:theorem}.

%% file: latex/figure/shifted_acc.tex
\begin{figure*}[t!]
\centering
    \begin{subfigure}[t]{0.32\linewidth}
        \centering
        \includegraphics[width=\linewidth]{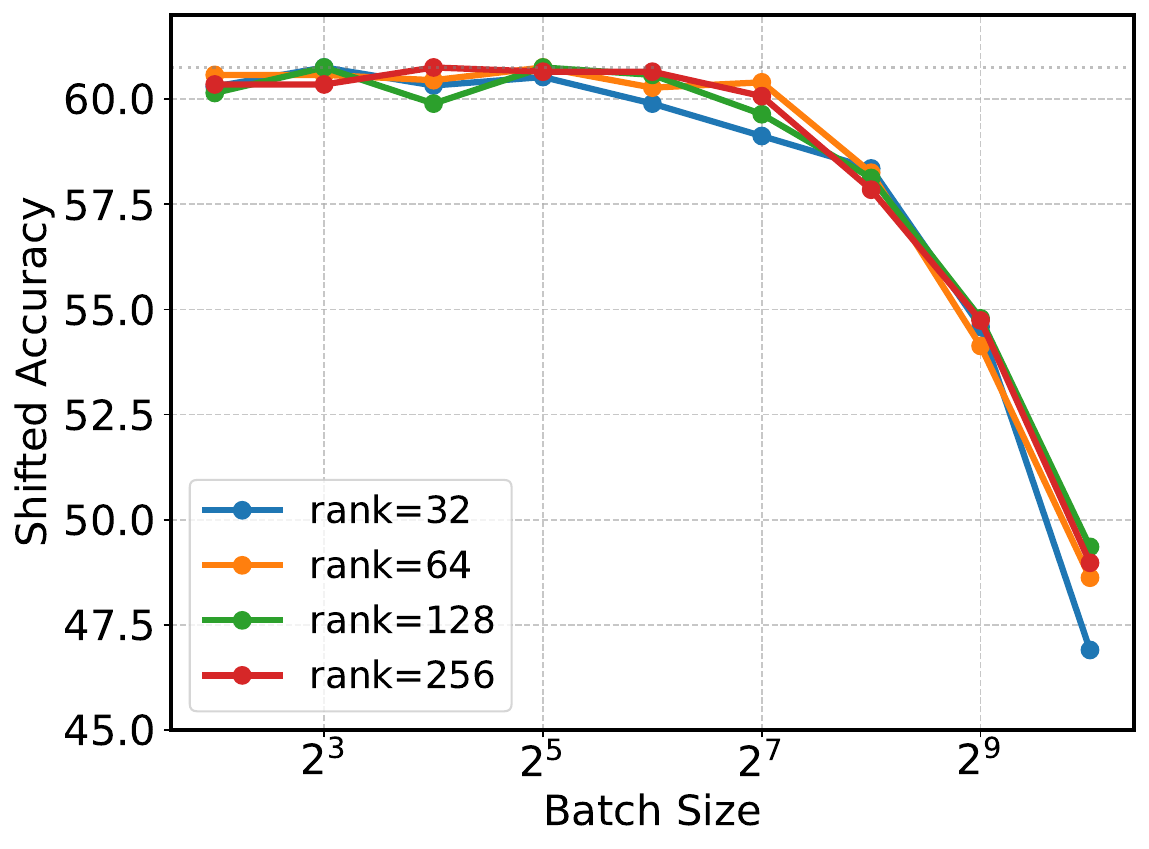}
        \captionsetup{skip=0pt}
        \caption{LoRA rank}
        \label{fig:rank_shift}
    \end{subfigure}
    \hfill
    \begin{subfigure}[t]{0.32\linewidth}
        \centering
        \includegraphics[width=\linewidth]{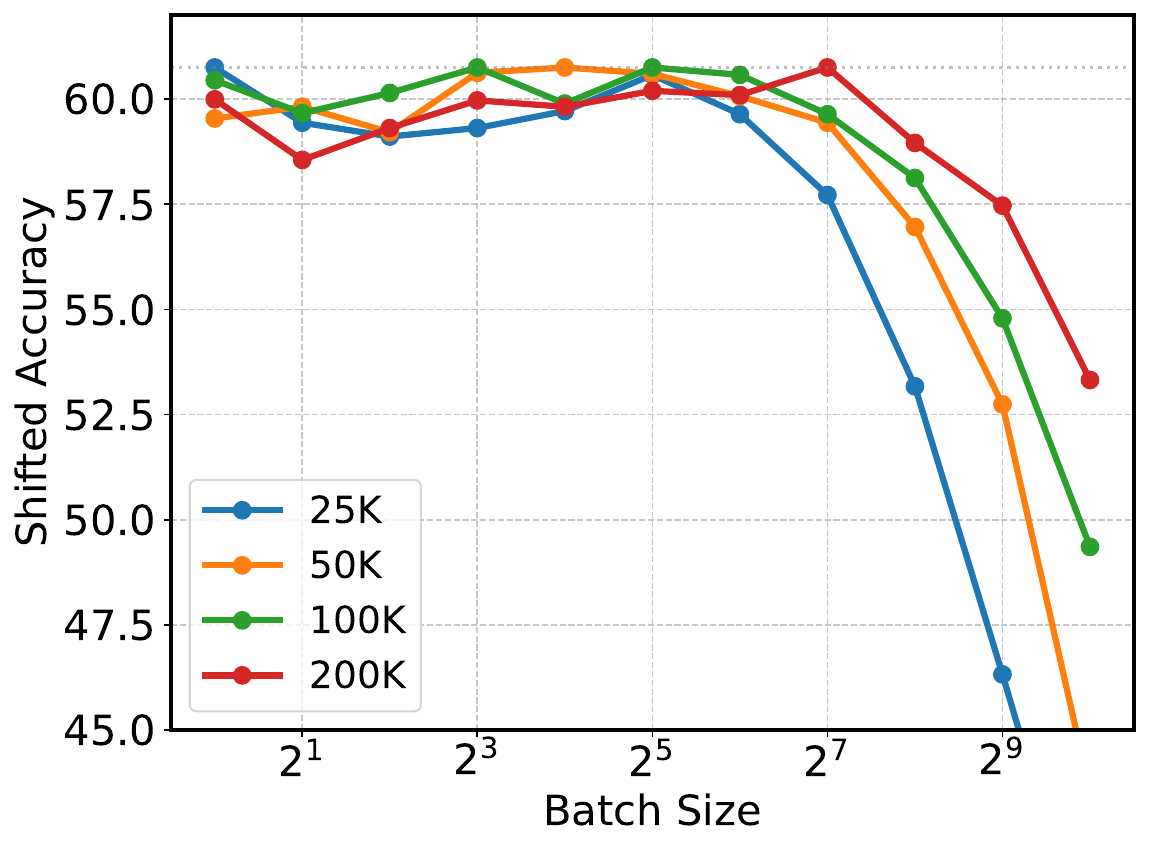}
        \captionsetup{skip=0pt}
        \caption{Dataset scale}
        \label{fig:data_shift}
    \end{subfigure}
    \hfill
    \begin{subfigure}[t]{0.32\linewidth}
        \centering
        \includegraphics[width=\linewidth]{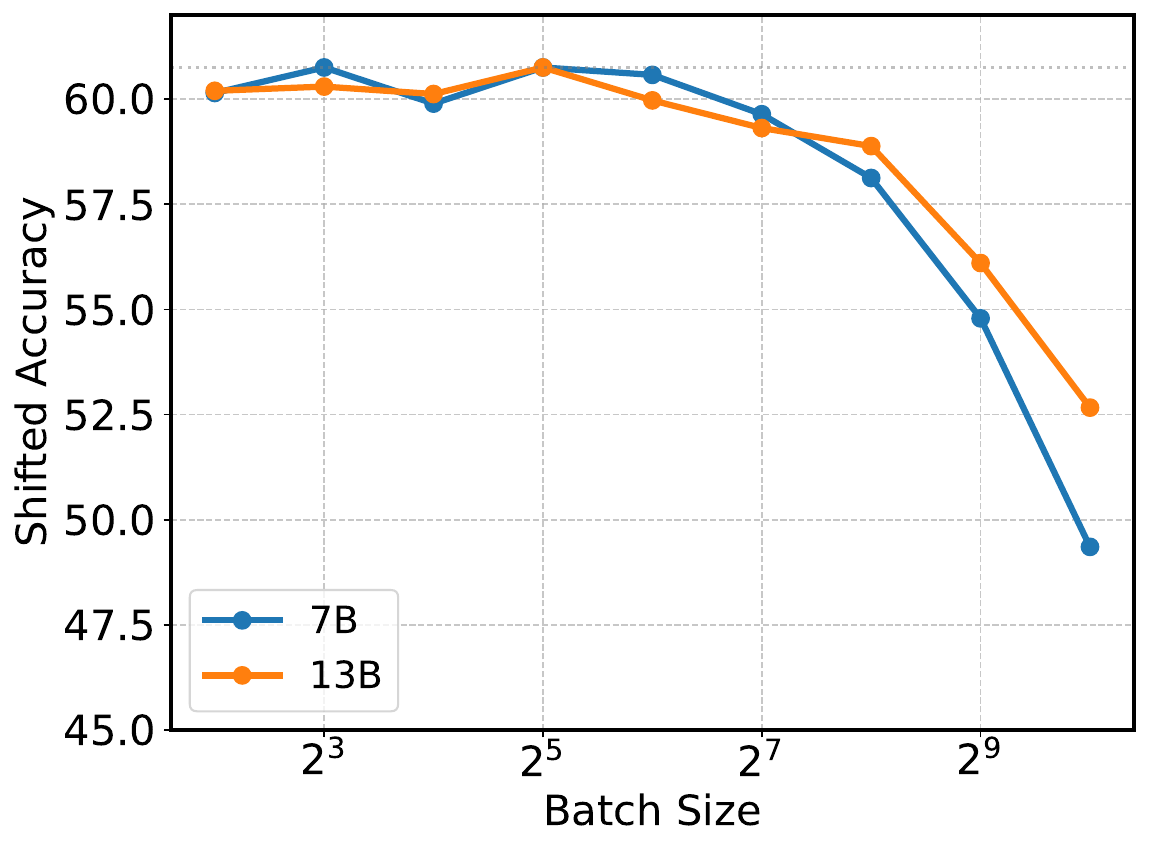}
        \captionsetup{skip=0pt}
        \caption{Base model capacity}
        \label{fig:model_shift}
    \end{subfigure}
    \hfill
\caption{\textbf{Effect of batch size across key determinants.} We examine the interaction between batch size and three factors.
For a unified comparison, accuracies are normalized by shifting the maximum value of each setup to match the default configuration (rank = 128, dataset scale = 100K, model capacity = 7B). Original accuracy values are provided in Figure \ref{fig:orr_acc} for reference.}
\label{fig:shift_acc}
\end{figure*}

%% file: latex/5_conclusion.tex
\section{Conclusion}

In this study, we provide a comprehensive re-evaluation of LoRA and its prominent variants, specifically focusing on the critical yet overlooked impact of batch size on fine-tuning performance. Our analysis demonstrates that reported advantages of some LoRA variants are confounded by hyperparameter bias. Notably, we highlight that vanilla LoRA remains a remarkably strong baseline when evaluated under optimized conditions.

Contrary to conventional heuristics suggesting that smaller batches yield lower regret, we identify the optimal point, where large batches do not harm performance. Crucially, we reveal that the optimal batch size of LoRA can be found efficiently by utilizing a smaller model with a small rank, while preserving the scale of dataset. These insights provide a rigorous foundation for batch size effects in LoRA and offer practical guidance for efficient resource utilization in limited environments.

%% file: latex/6.limit.tex
\section*{Limitations}
While our analysis provides empirical and practical insights, several avenues remain for further investigation. First, although we provide a theoretical justification via the gradient noise scale, a comprehensive theoretical framework that fully captures the non-convex optimization dynamics of LoRA is still lacking. This prevents the derivation of universal rules for optimal batch size selection. Second, while our evaluations encompass diverse model families and tasks, computational constraints limited our ability to conduct the in-depth determinant analysis across every possible task and architectural configuration. Future research is required to validate these observed patterns across even broader scales and diverse training objectives.

%% file: latex/App_related.tex
\section{Related Work}
\label{app:related_work}

\noindent\textbf{Batch size in deep learning and LLMs.} The influence of batch size on optimization and generalization has been extensively investigated in the broader deep learning literature \cite{keskar2017, hoffer2017, shallue2019, mccandlish2018, golmant2018}. \citet{mccandlish2018} characterize a critical batch size $\mathcal{B}_{crit}$ beyond which increasing the batch size yields diminishing returns in training acceleration, along with the concept of gradient noise scale. While these principles are well established for standard architectures, their application to modern Large Language Models (LLMs) is an area of active refinement. Recent work revisits these questions in the LLM training setting. In particular, \citet{Zhang2025CBS_llm} systematically measure critical batch size across model and data scales and find that it is driven primarily by data size rather than parameter count. On the fine-tuning side, \citet{Pareja2025unveil_SFT} observe that larger batches can be beneficial when paired with appropriately reduced learning rates in full fine-tuning. At the other extreme, \citet{Marek2025smallbatch} show that very small batches can train stably with batch size-aware optimizer hyperparameters, and they argue that gradient accumulation can be compute-inefficient when its primary role is to emulate large batches.

\noindent\textbf{Distinct dynamics of LoRA.} A growing body of evidence suggests that findings from full fine-tuning (FFT) do not directly translate to parameter-efficient fine-tuning methods such as LoRA. The low rank constraint imposes a distinct optimization dynamics; \citet{Shuttleworth2025} show that FFT updates typically evolve with a significantly higher effective rank than updates of the same rank LoRA. Furthermore, LoRA tends to exhibit a "learn-less and forget-less" phenomenon \cite{Biderman2025learnless}, characterized by smaller weight shifts and better preservation of pre-trained features compared to FFT. These differences imply that the sensitivity to batch size in LoRA may follow a trajectory distinct from standard patterns.

While \citet{Pareja2025unveil_SFT} suggest that larger batches are generally beneficial for FFT, recent work \cite{schulman2025lora_regret} investigates a ``low regret regime'', where LoRA can achieve similar performance to full fine-tuning, arguing that LoRA is less tolerant of large-batch training than FFT. Moreover, they suggest this gap may not matter much in practice, since smaller batches are better in both LoRA and full fine-tuning. Our work diverges from this binary view; we demonstrate that neither smaller nor larger batches are universally optimal. Instead, we identify the optimal batch size that can be determined through three fundamental determinants of LoRA rank, dataset scale, and model capacity, as detailed in Section \ref{sec:determinants}.

%% file: latex/App_steps.tex
\section{Experiments Under Fixed Steps}
\label{app:steps}

\input{latex/figure/step}

Figure \ref{fig:step} illustrates test performance across varying batch sizes under a constrained 200 optimization steps. In this configuration, where training with a batch size of 512 corresponds to approximately one epoch of the MetaMathQA100K dataset, larger batch sizes naturally entail a greater volume of training samples. Consequently, we observe a positive correlation between batch size and performance, as the increased data outweighs the potential impact of batch size. This indicates that the degradation observed in fixed sample protocol is partially attributable to insufficient updates rather than batch size itself. However, we emphasize that larger batches do not always yield superior outcomes even in a fixed step setup; rather, there is a point of diminishing returns where the effect of batch size begins to outweigh the effect of data volume

%% file: latex/figure/step.tex
\begin{figure}[t!]
\begin{center}
\includegraphics[width=0.42\textwidth]{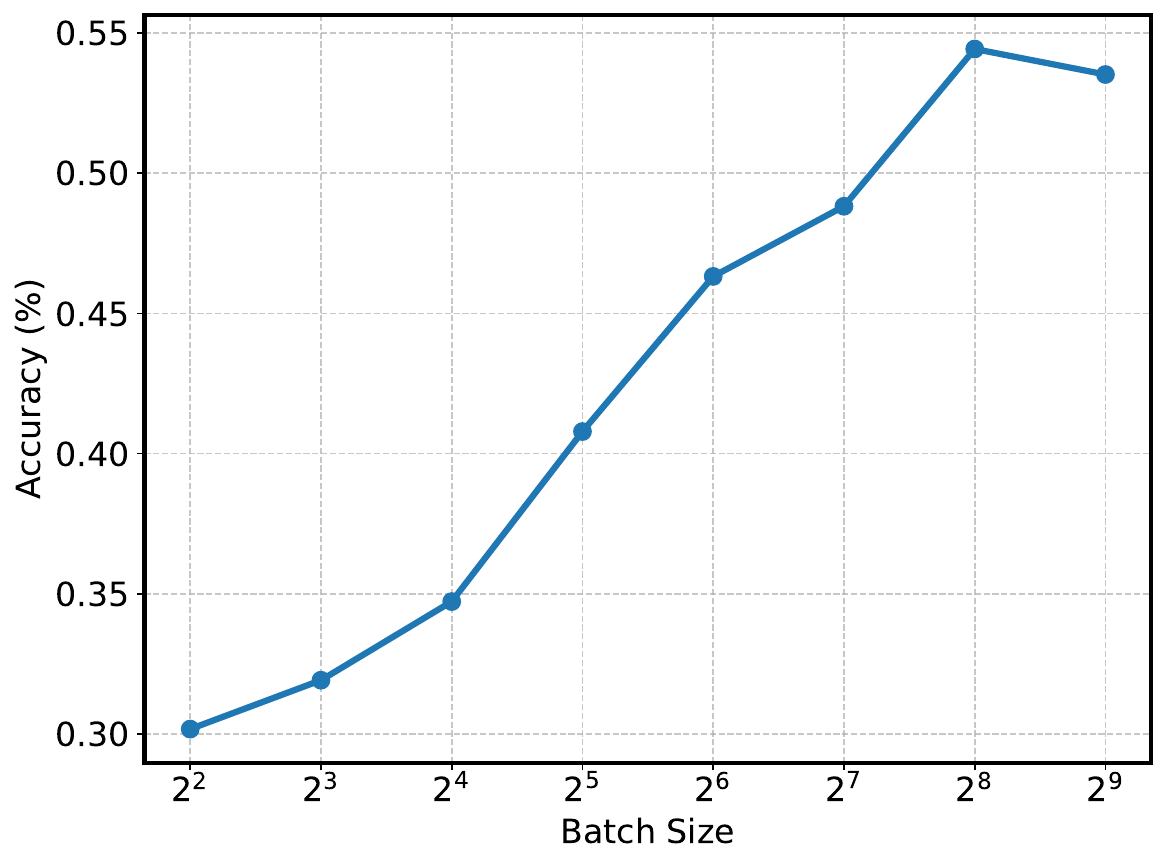}
\end{center}
   \caption{\textbf{Batch size effect under fixed training steps.} Under a fixed number of optimization steps, larger batch sizes lead to superior performance due to increased data throughput. This trend persists until a critical threshold is reached, where increasing batch size beyond which increasing batch size no longer yields improvements in test accuracy.
}
   \label{fig:step}
\end{figure}

%% file: latex/App_warmpup.tex
\section{Ablations: Warm-up \& Learning Rate Scheduling}
\label{app:warmup}

\input{latex/figure/warmup}
We set the warm-up ratio to zero for all experiments. This decision stems from the fact that varying the batch size alters the total number of training steps for a fixed sample protocol, making it difficult to adjust the warm-up period across different configurations. Furthermore, \citet{Pareja2025unveil_SFT} suggest that warm-up steps and learning rate schedulers have a negligible impact on performance in supervised fine-tuning scenarios.

In this section, we evaluate the sensitivity of LoRA fine-tuning to warm-up phases and learning rate scheduling. Figure \ref{fig:warmup} illustrates the test accuracies across a range of batch sizes under varying optimization configurations. Consistent with the observations of \citet{Pareja2025unveil_SFT}, our results indicate that the inclusion of a warm-up stage yields marginal performance differences; rather, it leads to greater robustness across a wide range of batch sizes. In contrast, we observed that the learning rate scheduler (e.g. cosine decay) itself significantly influences accuracy, resulting in a substantial performance degradation of approximately 5\% in accuracy. Consequently, we exclude only the warm-up phase from our setup.

%% file: latex/figure/warmup.tex
\begin{figure}[t!]
\begin{center}
\includegraphics[width=0.42\textwidth]{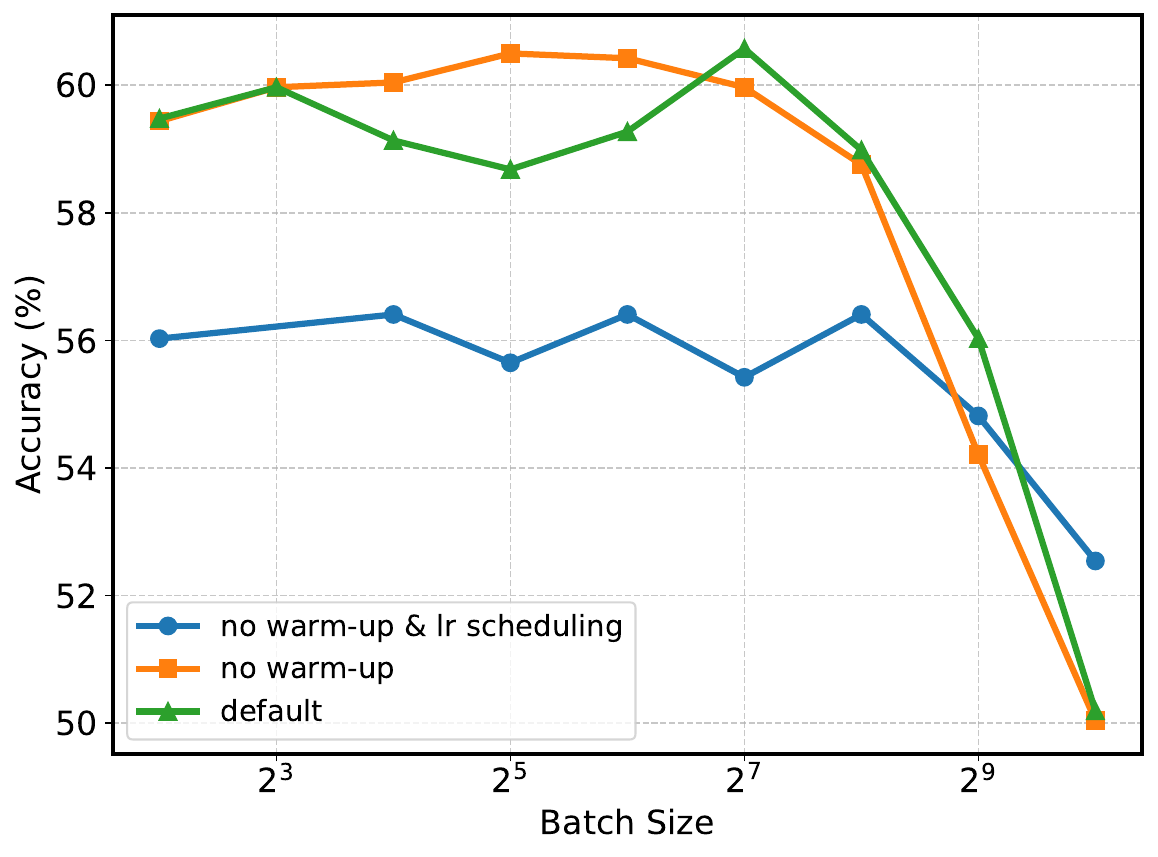}
\end{center}
   \caption{\textbf{Impact of warm-up phase and lr scheduling.} We show that while removing the warm-up phase maintains robust performance across all batch sizes without significant accuracy loss, the LR scheduling remains critical to model performance.
}
   \label{fig:warmup}
\end{figure}

%% file: latex/App_config.tex
\section{Hyperparameter Configurations}
\label{app:config}

In Table \ref{tab:config}, we provide our detailed configurations of all hyperparameters. This setup basically follows that of \citet{Meng2024pissa}, except for some settings discussed in Section \ref{sec:setup}.

\input{latex/figure/config}

%% file: latex/figure/config.tex
\begin{table}[h]
\centering
\begin{tabularx}{0.8\linewidth}{@{} >{\raggedright\arraybackslash}p{3cm} X @{}}
\toprule
\multicolumn{2}{c}{\textbf{Default Configurations}} \\
\midrule
rank $r$      & 128 \\
$\alpha$      & same as rank $r$ \\
Optimizer     & AdamW \\
Dropout       & 0 \\
LR Scheduler  & cosine \\
Warmup Ratio  & 0 \\
Epoch         & 1 \\
Placement     & query, key, value, output, gate, MLP up, MLP down \\
\bottomrule
\end{tabularx}
\caption{\textbf{Detailed hyperparameter configurations.}}
\label{tab:config}
\end{table}

%% file: latex/App_lr.tex
\section{Optimal Learning Rate Dynamics}
\label{app:opt_lr}

\input{latex/figure/lr}

As detailed in Section \ref{sec:setup}, we determine the optimal learning rate (lr) independently for each batch size configuration to ensure a fair comparison. Specifically, we conduct a grid search across an lr range of $[1\times10^{-5}, 3\times10^{-3}]$. To ensure a precise identification of the optimal point, we employ a refined logarithmic grid consisting of $\{1, 2, 5\} \times 10^n$, with additional points such as $3\times10^{-4}$ to provide higher granularity in the high performance region where the optimal lr frequently appears. We also include $3\times10^{-3}$ as the learning rate approaches the stability boundary. Figure \ref{fig:lr} illustrates the resulting optimal lr trajectory for each batch size in our default setup with a single seed.

Our analysis reveals that the optimal learning rate exhibits a non-monotonic relationship with batch size: the lr initially increases but subsequently declines as the batch size continues to increase. This behavior in LoRA scenarios aligns with recent findings by \citet{Li2024surginglr}, suggesting that the standard linear scaling rule may not hold for Adam style optimizers.

\input{latex/figure/org_acc}

%% file: latex/figure/lr.tex
\begin{figure}[t!]
\begin{center}
\includegraphics[width=0.42\textwidth]{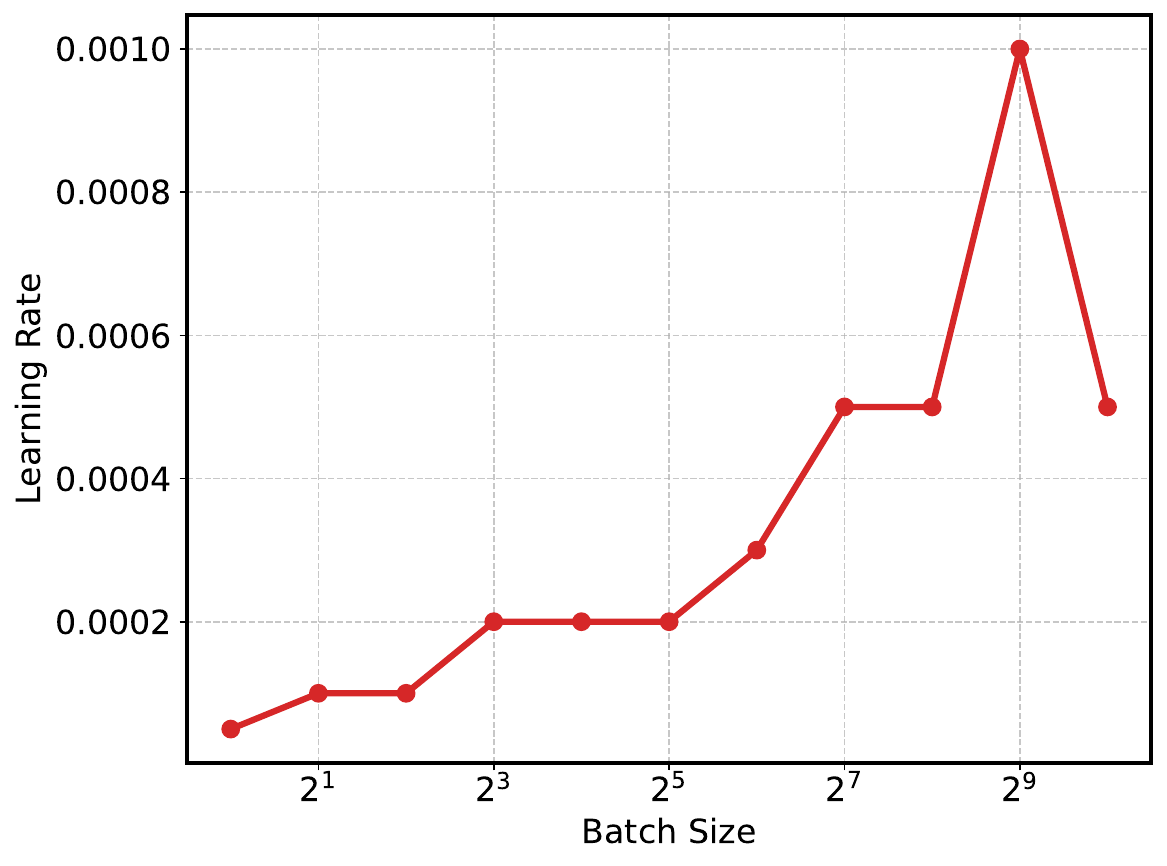}
\end{center}
   \caption{\textbf{Interaction between optimal learning rate and batch size.} We demonstrate that the optimal learning rate follows a non-monotonic trajectory as batch size increases, initially scaling upward before declining beyond a critical threshold.
}
   \label{fig:lr}
\end{figure}

%% file: latex/figure/org_acc.tex
\begin{figure*}[t!]
\centering
    \begin{subfigure}[t]{0.32\linewidth}
        \centering
        \includegraphics[width=\linewidth]{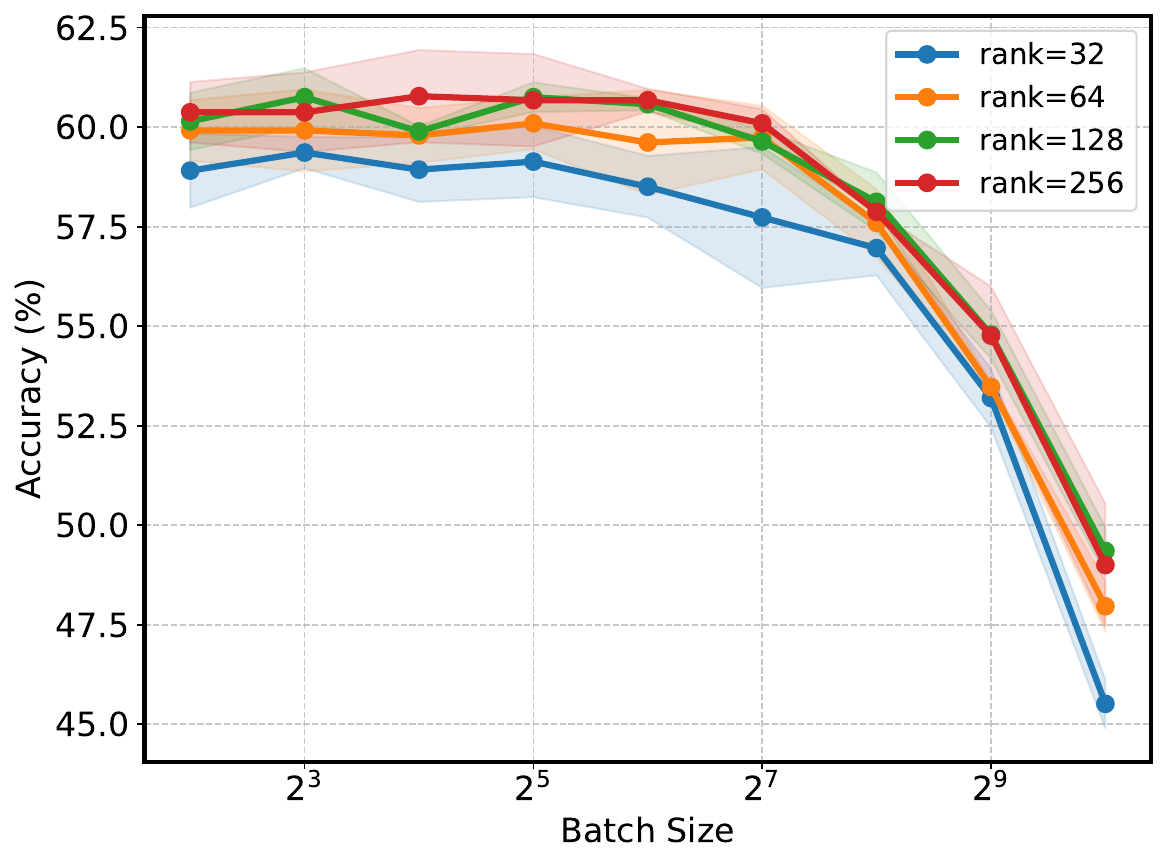}
        \captionsetup{skip=0pt}
        \caption{LoRA rank}
        \label{fig:rank_org}
    \end{subfigure}
    \hfill
    \begin{subfigure}[t]{0.32\linewidth}
        \centering
        \includegraphics[width=\linewidth]{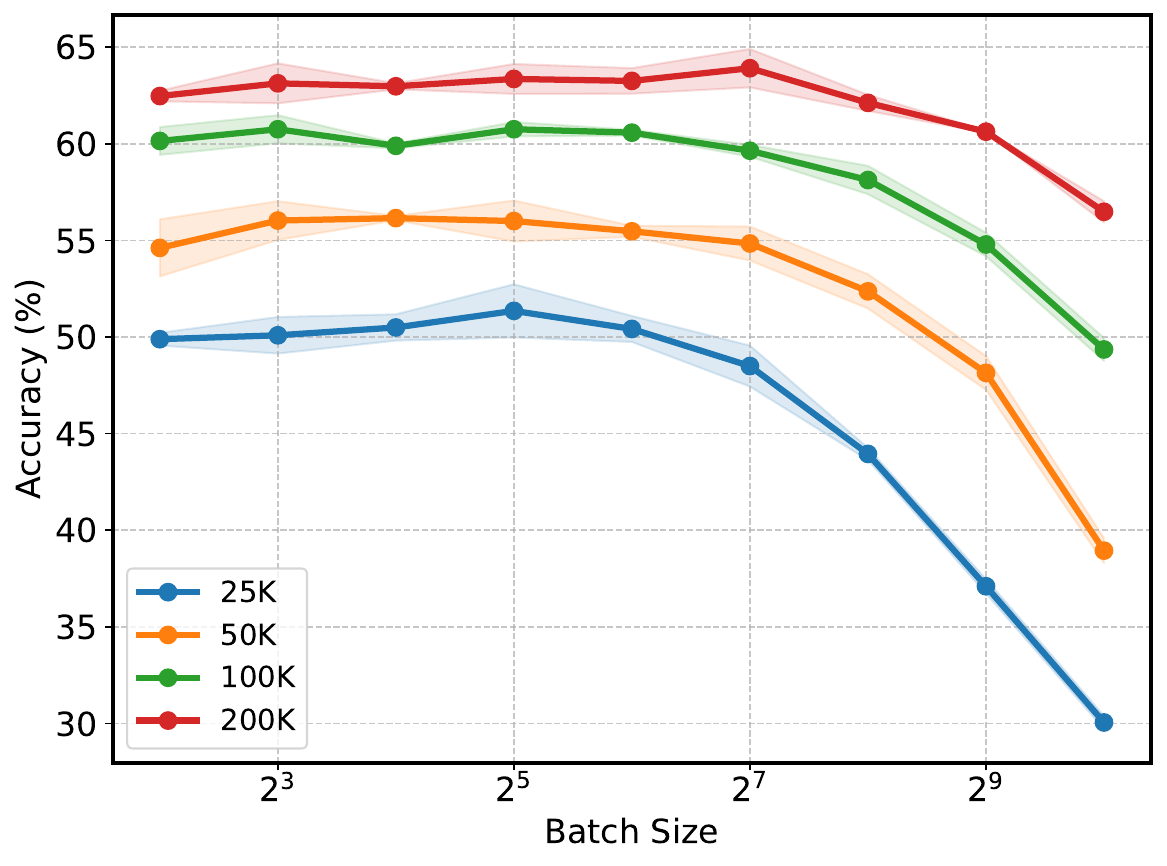}
        \captionsetup{skip=0pt}
        \caption{Dataset scale}
        \label{fig:data_org}
    \end{subfigure}
    \hfill
    \begin{subfigure}[t]{0.32\linewidth}
        \centering
        \includegraphics[width=\linewidth]{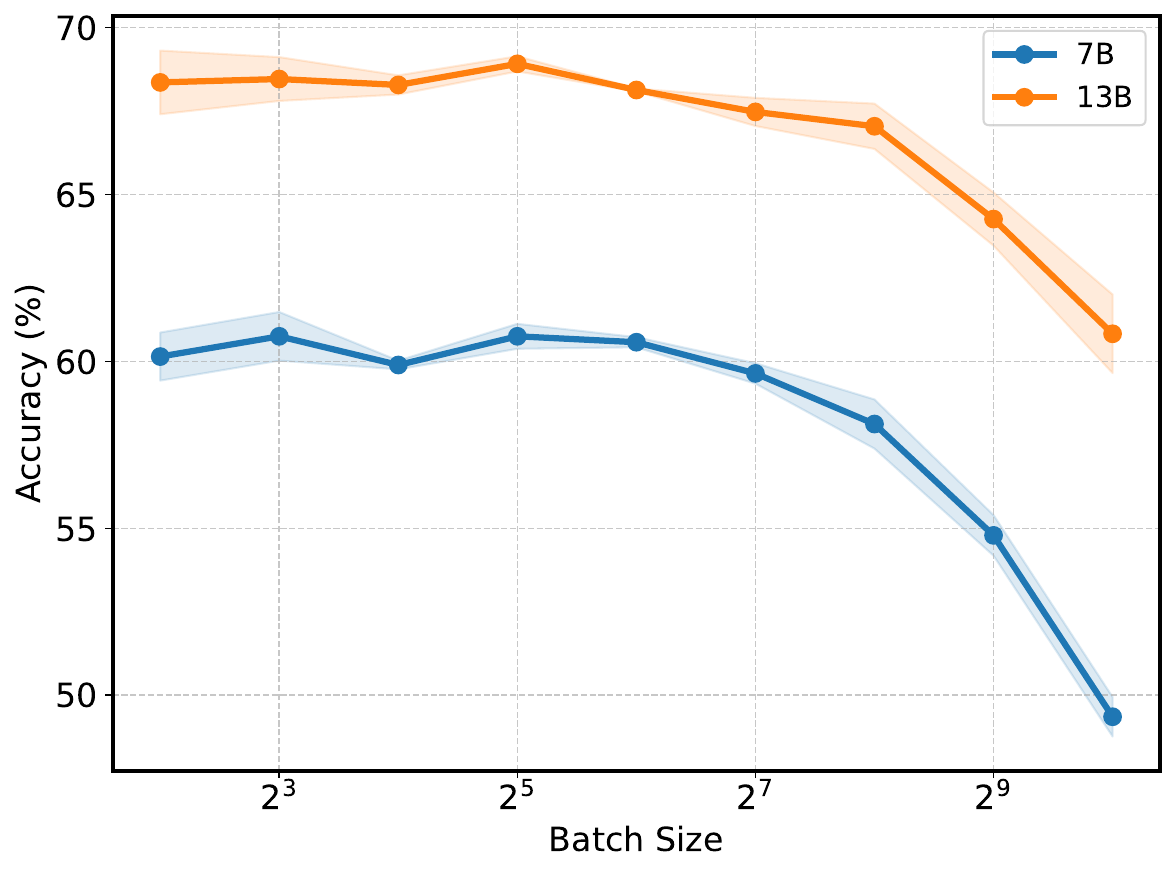}
        \captionsetup{skip=0pt}
        \caption{Base model capacity}
        \label{fig:model_org}
    \end{subfigure}
    \hfill
\caption{\textbf{Original accuracies of Figure \ref{fig:shift_acc}.} We provide original accuracies of Figure \ref{fig:shift_acc}, which are averaged over three seeds.}
\label{fig:orr_acc}
\end{figure*}

%% file: latex/App_theoretical.tex
\section{Theoretical Analysis}
\label{app:theorem}

\subsection{Gradient noise scale proxy for estimating the optimal batch size}

\citet{mccandlish2018} introduce the concept of the \textbf{gradient noise scale (GNS)} to determine the optimal batch size. Let $g$ denote the full gradient and $\Sigma$ represent the per-example gradient covariance matrix. The optimal batch size $B_{\text{crit}}$ can be estimated as the ratio of the gradient noise to the gradient magnitude, weighted by the Hessian $H$:
\begin{equation}\label{eq:B_noise}
B_{\mathrm{noise}} = \frac{\text{tr}(\Sigma H)}{g^\top H g}
\end{equation}
Under the assumption of a well-conditioned optimization landscape, \cref{eq:B_noise} simplifies to the GNS proxy as:
\begin{equation}\label{eq:B_simple}
B_{\mathrm{simple}} \approx \frac{\text{tr}(\Sigma)}{|g|^2}
\end{equation}

\subsection{Proof of Lemma \ref{lemma1}}

\begin{proof}
We derive the expected gradient noise scale at initialization ($w_0 = 0$) by analyzing the trace of the gradient covariance and the norm of the full gradient.

\paragraph{Trace of Covariance.} First, at initialization, the per-example gradient is given by $\nabla L_i(0) = -y_i \tilde{x}_i = -y_i U x_i \in \text{col}(U)$. Since all stochastic gradients reside within this $r$-dimensional subspace, the per-example gradient covariance $\Sigma$ has a rank of at most $r$. Consequently, its trace is given by $\text{tr}(\Sigma) = r$.

\paragraph{Full Gradient Norm.} The full gradient $g = \frac{1}{N} \sum \nabla L_i$ follows a Gaussian distribution $g \sim \mathcal{N}(0, \frac{1}{N} UU^\top)$. Given the orthonormality of $U$ (i.e., $U^\top U = I_r$), the squared norm follows a scaled chi-squared distribution, $\|g\|^2 \sim \frac{1}{N} \chi^2_r$.

\paragraph{Expected Gradient Noise Scale.} Finally, by leveraging the property of the inverse-chi-squared distribution, where $\mathbb{E}[1/\chi^2_r] = 1/(r-2)$ for $r > 2$, we calculate the expected noise scale as follows:
\begin{equation}
    \mathbb{E}[B_{\mathrm{simple}}] \approx \text{tr}(\Sigma) \cdot \mathbb{E}\left[\frac{1}{\|g\|^2}\right] = \frac{rN}{r-2}.
\end{equation}
\end{proof}

%% file: latex/App_ablation.tex
\section{Ablations: Model Families \& Task}
\label{app:ablation}

\input{latex/figure/ablation}

We evaluate the impact of batch size across additional model architectures, specifically Qwen3-0.6B and Gemma3-1B (see Figures \ref{fig:qwen} and \ref{fig:gemma}). Consistent with our primary findings using LLaMA-2-7B, these results highlight the existence of an optimal batch size, where increasing the batch size does not compromise accuracy. Furthermore, we provide results for the HumanEval benchmark fine-tuned on CodeFeedback100K  (Figure \ref{fig:humaneval}). Although the precise optimal threshold varies, these findings suggest that the observed batch size dynamics generalize across diverse model families and task domains.

%% file: latex/figure/ablation.tex
\begin{figure*}[t!]
\centering
    \begin{subfigure}[t]{0.32\linewidth}
        \centering
        \includegraphics[width=\linewidth]{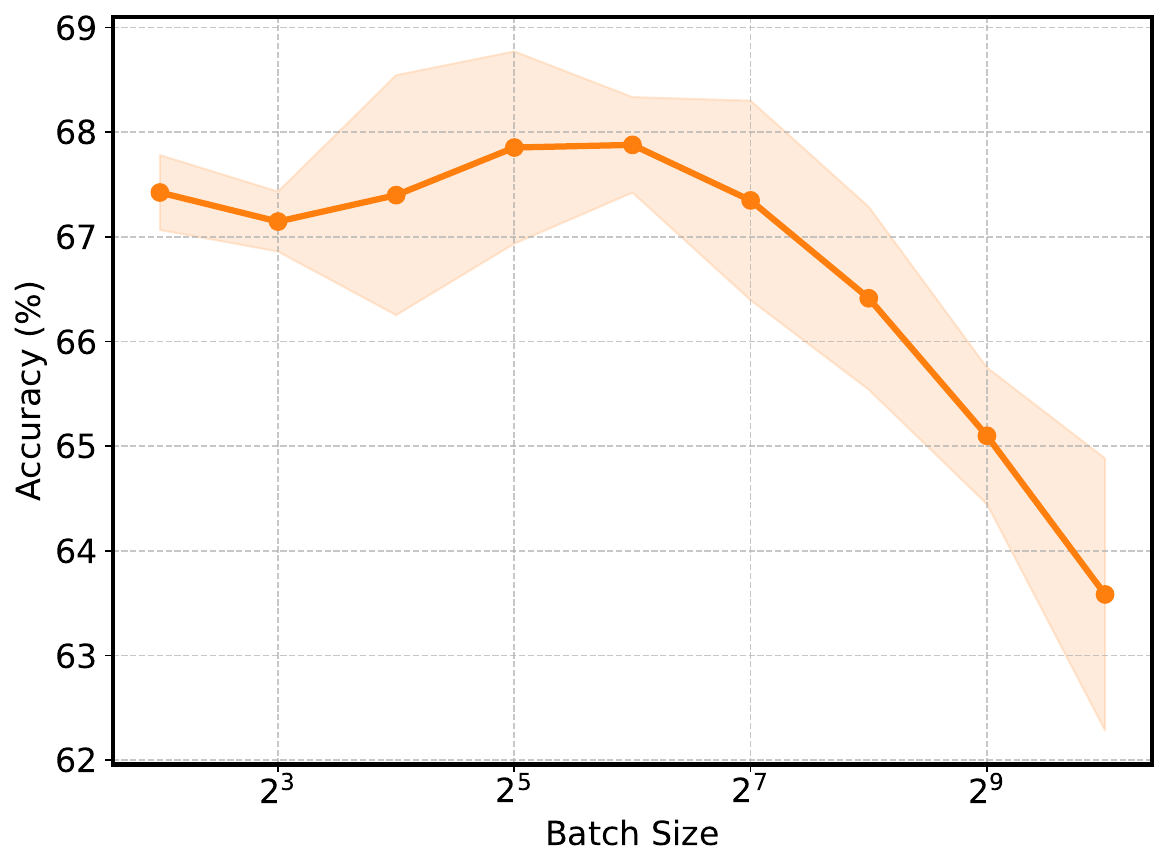}
        \captionsetup{skip=0pt}
        \caption{Qwen3-0.6B}
        \label{fig:qwen}
    \end{subfigure}
    \hfill
    \begin{subfigure}[t]{0.32\linewidth}
        \centering
        \includegraphics[width=\linewidth]{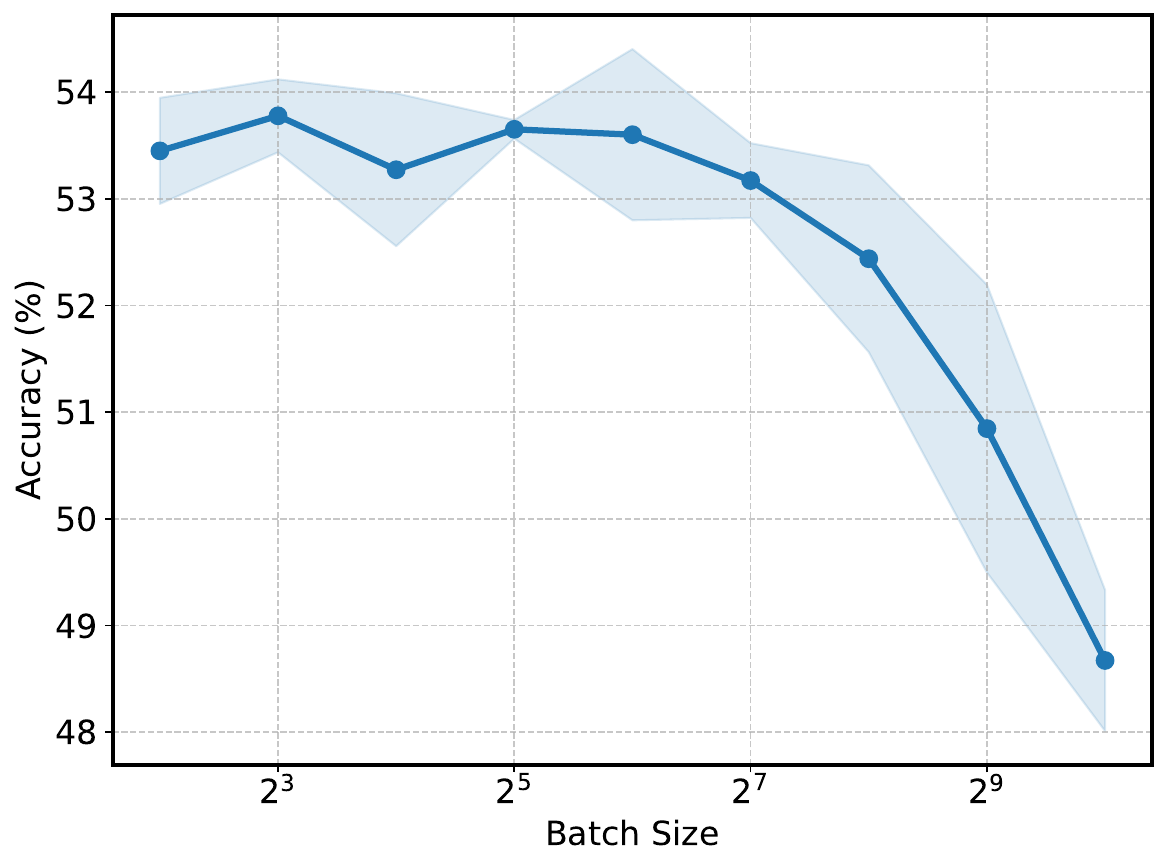}
        \captionsetup{skip=0pt}
        \caption{Gemma3-1B}
        \label{fig:gemma}
    \end{subfigure}
    \hfill
    \begin{subfigure}[t]{0.32\linewidth}
        \centering
        \includegraphics[width=\linewidth]{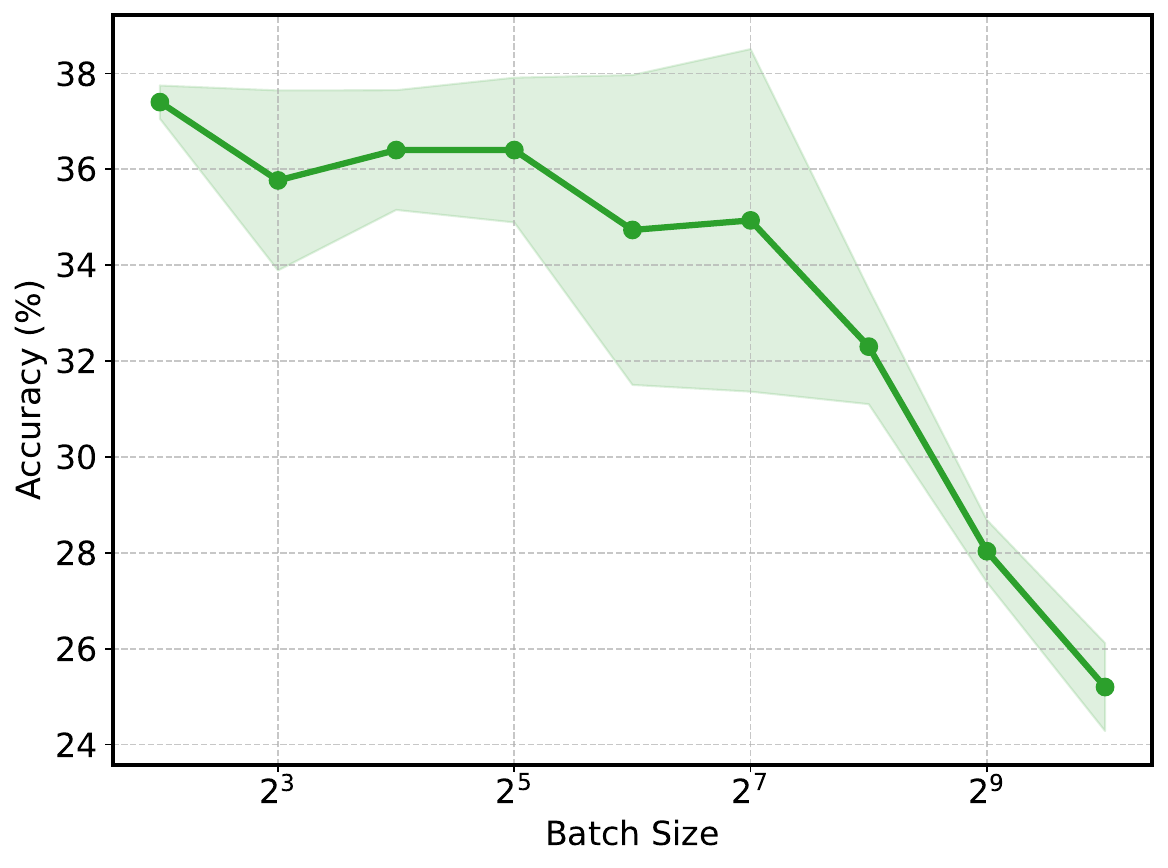}
        \captionsetup{skip=0pt}
        \caption{Code Generation}
        \label{fig:humaneval}
    \end{subfigure}
    \hfill
\caption{\textbf{Results on other model families and code generation task.} We demonstrate performances with batch size sweep for recent model families and the code generation task, which are averaged over three seeds.}
\label{fig:others}
\end{figure*}

%% file: latex/App_gpuhour.tex
\section{Discussion for Computational Scale}

To ensure empirical rigor, our experimental design relies on an exhaustive hyperparameter search and evaluation across multiple random seeds, requiring substantial computational resources. In this section, we quantify the computational cost required for our LoRA hyperparameter search, highlighting the utility of our findings to the community. 

Each single LoRA performance curve (e.g., in Figure \ref{fig:method}) requires approximately 1,058 GPU hours (~1.5 GPU months) of training on NVIDIA RTX 6000 Ada GPUs without inference and evaluation. Given that this exhaustive search was repeated across multiple experimental dimensions---including model families, model capacities, ranks, data scales, and tasks---the current computational scope represents a highly substantial contribution within an academic setting.

\paragraph{Implementation Details.} Each individual training takes an average of 1.96 hours. For the initial seed, we sweep over a broad grid of 7 learning rates (LR) per batch size to map the optimal performance region. For subsequent seeds, we narrow this search space to the 4 most promising LR configurations based on the initial seed's results. This yields an average of 5 LR grids per seed. Taken together, the total training time for a single performance curve ($T_{\text{total}}$) is calculated as follows:
$T_{\text{total}} = \mathrm{1.96\ \mathrm{hours}\times 4\ GPUs \times 5\ LR\ grids} \times 9\ \mathrm{Batch\ Sizes} \times 3\ \mathrm{Seeds} \approx 1,058\mathrm{\ GPU\ hours}$.

%% file: latex/App_concurrent.tex
\section{Comparison with \citet{lee2026learningratematters}}
\label{app:concurrent}

Concurrent with our work, \citet{lee2026learningratematters} investigate the hyperparameter bias of LoRA, focusing primarily on the role of the learning rate. While they offer a complementary conclusion to ours---vanilla LoRA frequently matches or outperforms its variants when hyperparameters are properly optimized---our work differs fundamentally in scope and practical applicability.

\paragraph{Batch Size vs. Learning Rate.} A foundational distinction lies in the scope of the hyperparameter search space. While \citet{lee2026learningratematters} provide an extensive grid search over learning rates, their batch size sweep is limited to a few selected combinations. We argue that their conclusion---\textit{the learning rate is a more critical factor than the batch size}--is fundamentally incomplete due to an overlooked joint dependency between batch size and learning rate. Moreover, as demonstrated by \citet{lee2026learningratematters}, an overwhelming majority of prior LoRA studies routinely adopt arbitrary default batch sizes. Our comprehensive joint tuning uncovers that the optimal choice of batch size cannot be compensated for by learning rate adjustments alone. When batch size is omitted from tuning, it introduces a substantial confounding bias, leading to artificial performance crossovers as illustrated in Figure \ref{fig:method}.

\paragraph{Scalable Small-Scale Proxies.} We introduce a highly scalable, cost-efficient proxy strategy grounded in the gradient noise scale. Consequently, one can reliably identify the optimal batch configuration using a low-capacity, low-rank proxy model while preserving the target dataset size, and seamlessly transfer this configuration to large-scale deployments at a fraction of the computational cost. 

In contrast, \citet{lee2026learningratematters} propose a strategy for determining hyperparameter ranges via second-order Hessian analysis at the initialization point. While theoretically insightful, this approach cannot fully account for the altering optimization trajectories during actual training. Furthermore, the standard SGD scaling rules cited by \citet{lee2026learningratematters} do not reliably transfer to Adam-style optimizers \cite{Li2024surginglr}, as we discuss in Appendix \ref{app:opt_lr}. This fundamentally limits the practical applicability of their proposed proxy.

%% file: latex/App_license.tex
\section{Artifact Licenses and Intended Use}
\label{sec:license}

In this section, we provide the licenses for all the pretrained models and datasets utilized in our study. Our use of all artifacts strictly adheres to their original licenses and is entirely consistent with their intended use for academic research and evaluation.

\subsection{Pretrained Models}
\begin{itemize}[noitemsep]
    \item \textbf{LLaMA-2 (7B, 13B).} Llama 2 Community License Agreement.
    \item \textbf{Qwen3-0.6B.} Apache license 2.0.
    \item \textbf{Gemma3-1B.} Gemma Terms of Use.
\end{itemize}

\subsection{Datasets}
\begin{itemize}[noitemsep]
    \item \textbf{MetaMathQA.} MIT.
    \item \textbf{GSM8K.} MIT.
    \item \textbf{CodeFeedback.} Apache license 2.0.
    \item \textbf{HumanEval.} MIT.
\end{itemize}